\documentclass[twocolumn]{article} 
\usepackage[top=1.5in, bottom=1.in, left=0.5in, right=0.5in]{geometry}
\usepackage[utf8]{inputenc} 
\usepackage{authblk} 
\usepackage{amsfonts,amsmath,amsthm,amssymb}
\usepackage{hyperref}
\usepackage{graphicx}
\usepackage{algorithm} 
\usepackage{algorithmic} 
\usepackage[style=numeric, backend=biber]{biblatex}

\usepackage{bm} 
\usepackage{color} 

\usepackage{tikz}
\usetikzlibrary{decorations.pathmorphing} 
\usetikzlibrary{matrix} 
\usetikzlibrary{arrows} 
\usetikzlibrary{calc} 

\tikzstyle{block} = [draw,rectangle,thick,minimum height=2em,minimum width=2em]
\tikzstyle{sum} = [draw,circle,inner sep=0mm,minimum size=2mm]
\tikzstyle{connector} = [->,thick]
\tikzstyle{line} = [thick]
\tikzstyle{branch} = [circle,inner sep=0pt,minimum size=1mm,fill=black,draw=black]
\tikzstyle{guide} = []
\tikzstyle{snakeline} = [connector, decorate, decoration={pre length=0.2cm,
                         post length=0.2cm, snake, amplitude=.4mm,
                         segment length=2mm},thick, magenta, ->]

\addbibresource{rhn.bib}

\newtheorem{proposition}{Proposition}

\DeclareMathOperator{\tr}{tr}
\newcommand\numberthis{\addtocounter{equation}{1}\tag{\theequation}}
\newcommand{\pose}{X}
\renewcommand{\angle}{\theta}
\newcommand{\state}{\mathbf{x}}
\newcommand{\measurement}{\mathbf{z}}
\newcommand{\command}{\mathbf{u}}
\newcommand{\noisemodel}{\mathbf{m}}
\newcommand{\noisemeas}{\mathbf{n}}
\newcommand{\local}[1]{#1^{loc}}

\newcommand{\reference}[1]{#1^*}
\newcommand{\disttoref}[1]{\bar{#1}}
\newcommand{\disttoestimate}[1]{\tilde{#1}}
\newcommand{\estimate}[1]{\hat{#1}}

\newcommand{\normal}{{\mathcal{N}}}


\title{An Invariant Linear Quadratic Gaussian controller for a simplified car}
\author[1]{Sébastien Diemer}
\author[1]{Silvère Bonnabel}
\affil[1]{MINES ParisTech, PSL - Research University, Center for robotics, 60 Bd St Michel 75006 Paris, France, \texttt{[sebastien.diemer, silvere.bonnabel]@mines-paritech.fr}}
\begin{document}
\maketitle
\begin{abstract}
In this paper, we consider the problem of tracking a reference trajectory for a simplified car model based on unicycle kinematics, whose position only is measured, and where the control input and the measurements are corrupted by independent Gaussian noises. To tackle this problem we devise a novel observer-controller: the invariant Linear Quadratic Gaussian controller (ILQG). It is based on the Linear Quadratic Gaussian controller, but the equations are slightly modified to account for, and to exploit, the symmetries of the problem. The gain tuning exhibits a reduced dependency on the estimated trajectory, and is thus less sensitive to misestimates. Beyond the fact the invariant approach is sensible (there is no reason why the controller performance should depend on whether the reference trajectory is heading west or south), we show through simulations that the ILQG outperforms the conventional LQG controller in case of large noises or large initial uncertainties. We show that those robustness properties may also prove useful for motion planning applications.  
\end{abstract}
\section{Introduction}
\label{introduction}
The field of mobile robot control has been thoroughly studied in the past.
One class of problems of practical interest is the trajectory tracking problem, for which the robot's goal is to follow a predefined time-parameterized path.
The Linear Quadratic Gaussian (LQG) controller is a standard tool of linear  control that can handle process and measurement Gaussian noises and possesses optimality properties, see e.g. \cite{Stengel12}.
It consists of coupling a Kalman filter for the state estimation, and a Linear Quadratic (LQ) controller for the trajectory tracking. When the system is not linear, as this is the case for the unicycle model due to the cosines and sines terms, the LQG can be extended (at the price of a linearization of the model and output equations about the reference trajectory) but the (extended) LQG controller looses all its optimality properties. In particular, as soon as the true trajectory tends to deviate from the reference trajectory (due to a perturbation, that may originate from perceptual ambiguity for instance, or a large initial uncertainty, or merely large noises) there is no guarantee at all the LQG should be able to drive the robot back to the reference trajectory.  

In a deterministic setting, there has been numerous attempts to account for the symmetries of the problem in controller design (the geometric control literature being extremely vast and dating back to \cite{sussmann-72}). Notably, the interested reader is referred to the more recent work \cite{bullo-murray-auto99} where nonlinear controllers are devised for a relevant class of simple mechanical systems.  As concerns state estimation, there as been an increasing number of attempts to account for the symmetries in observer design over the last decade, the major body of literature on the subject having been motivated by attitude estimation (to cite a couple of papers see \cite{mahony-et-al-IEEE,arxiv-07}). The idea of combining invariant state estimation (that is, the art of building estimators that respect the symmetries of the problem) and invariant control can be traced back to \cite{Guillaume98}  which devises an observer-controller for the same problem as the one considered in the present paper, that is, trajectory tracking for a simplified car whose position only is measured. Finally, still in a deterministic setting, the work \cite{bonnabel-et-al:ifac11} discusses a separation principle for invariant observer-controllers.

In a stochastic setting, where noises are to be explicitly taken into account, the invariant Extended Kalman Filter (IEKF) introduced in continuous time in \cite{bonnabel_cdc07,bonnabel2009invariant} and in discrete time in \cite{barrau-bonnabel-cdc13}, is a novel methodology that aims at modifying the equations of the Extended Kalman Filter (EKF) a little so that they respect the symmetries of the problem (see also the more recent work of \cite{barczyk2013invariant}). The IEKF possesses some convergence properties that the EKF lacks. The goal of the present paper is to combine an IEKF with a LQ controller that respects the symmetries of the system under rotation and translation, in order to have a simplified car track a predefined trajectory in the presence of Gaussian noises. For this system, the invariant approach boils down to considering both the estimation error and the tracking error in the Frénet coordinates, that is, a moving frame attached to the car, and to devise a Kalman filter and a LQ controller for stabilizing the linearized errors. To this respect, the invariant LQ controller  can be also related to the work \cite{rouchon-rudolph-ncn99} that advocates the use of Frénet coordinates for tracking in SE(2). 

The  introduced invariant LQG controller possesses several properties. First of all, it is invariant to rotations and translations, that is, it ensures that the behavior is the same whether the car is, for example, trying to park automatically along a North-oriented of West-oriented sidewalk. Surprisingly enough, this property does not generally hold when devising a standard LQG controller for this problem. Moreover, the Kalman gain is proved to be independent of the estimated trajectory, a property which reminds the linear case. Indeed, in extended Kalman filtering, the equations being linearized about the \emph{estimated} trajectory, an erroneous estimate of the state can lead to a inappropriate gain that can in turn generate an even more erroneous estimate. This kind of positive feedback can lead to divergence of the filter, and  cannot occur when dealing with the IEKF proposed herein. Without exploring the theory of IEKF stability, which goes beyond the scope of the present paper, we show through extensive simulations the robustness of the proposed invariant LQG versus conventional LQG.

The remainder of this article is organized as follows. In \autoref{sec:LQGapproach} we define formally the problem at stake, the robot kinematics and its environment, and the conventional LQG traditionally used for trajectory tracking. Then, we derive the  equations of  the invariant LQG in \autoref{sec:invariant} and review its basic properties. In \autoref{sec:simulation_LQG} we compare through simulations the performances of the proposed invariant LQG with those of the conventional LQG. Simulations show the invariant approach outperforms the conventional one in case of large noises or large uncertainty on the initial state. Finally, we adapt in \autoref{sec:apriori} an approach recently introduced in \cite{VanDenBerg11p895}, and show how the linearized equations of the invariant observer and the invariant controller can be combined to determine in advance the probability distributions of the state of the robot along the reference trajectory.
This information can be used, for instance, to evaluate the probability of success of a planned trajectory. In this respect, it can be used by a planner to explicitly account for sensors and control uncertainties as do the planners from e.g. \cite{Roy99p35, Tedrake10p1038}.
Simulations indicate that the computed probability distributions capture much more closely the true dispersion of the tracking error (obtained through Monte-Carlo simulations) when an invariant LQG is used rather than a conventional LQG. 

\section{Problem formulation and LQG controller}
\label{sec:LQGapproach}
\subsection{Problem formulation}
\label{ssec:problemformulation}
In this paper, we consider a non-holonomic unicycle robot (simple car model) moving in a two-dimensional world (see e.g. \cite{campion1996robotics}. The robot is characterized by its state $\state = (\pose , \theta) \in \chi \subset \mathbf{R}^3$ where $\pose = (x, y)$ is the robot's position and $\theta$ its orientation. The dynamics governing the update of the state $\state_t$ to $\state_{t+1} = f(\state_t, \command_t, \noisemodel)$ writes:
\begin{align*}
    x_{t+1} &= x_t + \tau(u_t + v)\cos(\theta_t) \\
    y_{t+1} &= y_t + \tau(u_t + v)\sin(\theta_t) \numberthis \label{eq:dynamics}\\
    \theta_{t+1} &= \theta_t + \tau(\omega_t + w)
\end{align*}
where 
$\tau$ is the discretized time step, $\command = (u, \omega)$ is the system inputs, and $\noisemodel = (v, w)$ is the model noise.
The robot has access to its absolute pose in the environment through for instance a GPS or a video tracking system, yielding measurements $\measurement = H \state = (x + \noisemeas_x, y + \noisemeas_y).$
The orientation $\theta$ is supposed not to be measured.
We suppose that both the motion and the measurement noises are white and Gaussian.
$$\noisemodel \sim \normal(0, M), \qquad \noisemeas \sim \normal (0, N)$$
The noises $\noisemodel$ and $\noisemeas$ at all time steps $t$ are assumed to be mutually independent.
Finally, we suppose that $\noisemeas$ is isotropic (i.e. $N = \lambda I_2$), a reasonable assumption for GPS measurements restricted to an horizontal plane.

The environment contains a collection of obstacles $\chi^{obs}$ that  the robot must avoid colliding.
We denote $\chi^{free}=\chi \setminus \chi^{obs}$ the free space and $\chi^{goal} \subset \chi^{free}$ the goal region the robot must reach.
We define the reference trajectory as a collection of states $\reference \state_{0} , \ldots, \reference \state_n$ where $\reference \state_0 = \state^{start}$, $\reference \state_n \in \chi^{goal}$,  and $\forall t \quad 0 \leq t < n, \quad \reference \state_{t+1} = f(\reference \state_t, \reference \command_t, 0)$ that is to say, an evolution governed only by the dynamics and without motion noise. We denote by $\disttoref \state = \state - \reference \state$ and $\disttoref \command = \command - \reference \command$ the errors between the true and the reference trajectory.

In order for the robot to stay near the reference trajectory despite of the uncertainties in the measurements and the controls, we design a linear-quadratic controller, which aims at minimizing, under the dynamics constraints of \eqref{eq:dynamics}, the cost function $J(\state, \command):$ 
\begin{equation}
    J(\state, \command) = \mathbb E\left( I(\state, \command)\right)=
    \mathbb E\left(\sum_{t=0}^n (\disttoref \state_t C \disttoref \state_t^T
    + \disttoref \command_t D \disttoref \command_t^T) \right)
    \label{eq:cost}
\end{equation}
with $C$ and $D$ definite positive matrices that penalize the deviations in the tracking and in the actuator's command. $J$ thus appears as the average over a great number of experiments of the overall deviation $I$ associated to a single trajectory.

\subsection{Conventional LQG}
One approach to attack the problem defined in Section \ref{ssec:problemformulation}, is to linearize $f$ around the reference trajectory and use a conventional Linear Quadratic Gaussian (LQG) control.
LQG combines a Kalman filter for state estimation and a Linear Quadratic controller for the control.
It provides an optimal control, which minimizes the cost \eqref{eq:cost} in the case of linear dynamics (see e.g. \cite{Stengel12}).
The conventional LQG Algorithm is recalled in Algorithm \ref{alg:LQG}.

\begin{algorithm}[h]
    \caption{Conventional LQG}
    \label{alg:LQG}
    \begin{algorithmic}
        \REQUIRE Reference trajectory $\left( (\state_t, \command_t) \right)_{t=1\dots n}$
        \REQUIRE Initial covariance $P_0$
        \REQUIRE Off-line calculations of the Riccati gains \eqref{eq:LQ_Riccati_gain}
        \STATE{$\command_0 \leftarrow \reference \command_0$}
        \STATE{$\estimate \state_0 \leftarrow \reference \state_0$}
        \FOR{$0 < t \leq n$}
        \STATE{
            \begin{itemize}
                \item propagate estimation and covariance with \eqref{eq:Kalman_pred_est} and \eqref{eq:Kalman_pred_cov} \\
                \item acquire measurement $\measurement_t$\\
            \item update the best estimate and the covariance using \eqref{eq:Kalman_updt_est} and \eqref{eq:Kalman_updt_cov}\\
                \item output new command $\command_t$ computed by \eqref{eq:control}
            \end{itemize}
        }
        \ENDFOR
    \end{algorithmic}
\end{algorithm}

The best estimate (in the sense of least squares over a great number of experiments and under the linear approximation), and its covariance updates are given by the conventional extended Kalman filter (EKF) equations: 
\\

\emph{Process update (conventional EKF):}
\begin{align}
    P^-_{t + 1} &= A_t P_t A_t^T + B_t M B_t^T \label{eq:Kalman_pred_cov}\\
    \estimate \state^-_{t + 1} &= f(\estimate \state_t, \command_t, 0) \label{eq:Kalman_pred_est}
\end{align}
where here:
\begin{align*}
    A_t &= \frac{\partial f}{\partial \state}\bigg|_{\estimate \state_t, \command_t, 0} =
    \begin{pmatrix}
     1 & 0 & -\tau u_t \sin(\estimate \theta_t) \\ 
     0 & 1 & \tau  u_t \cos(\estimate \theta_t) \\ 
     0 & 0 & 1
    \end{pmatrix}, \\
    B_t &=
    \frac{\partial f}{\partial \noisemodel}\bigg|_{\estimate \state_t, \command_t, 0} =
    \tau
    \begin{pmatrix}
     \cos(\estimate \theta_t) & 0 \\
     \sin(\estimate \theta_t) & 0 \\
     0 & 1 
    \end{pmatrix}
\end{align*}

\emph{Measurement update (conventional EKF):}
 \begin{align}
    K_{t+1} &= P^-_{t+1} H^T ( H P^-_{t+1} H^T + N_t )^{-1} \\
    P_{t+1} &= (I-K_{t+1} H)P^-_{t+1} \label{eq:Kalman_updt_cov}\\
    \estimate \state_{t + 1} &= \estimate \state^-_{t+1}
    + K_{t+1} (\measurement_t - H \estimate \state^-_{t+1}) \label{eq:Kalman_updt_est}
\end{align}
Likewise, the LQ controller linearized equations read:
\begin{equation}
    \disttoref \state_{t+1} =
    A^*_t \disttoref \state_t + 
    B^*_t \disttoref \command_t + 
    B^*_t \noisemodel 
    \label{eq:controller_conventional_LQ}
\end{equation}
where here:
\begin{align*}
    A^*_t &= \frac{\partial f}{\partial \state}\bigg|_{\reference \state_t, \reference \command_t, 0} =
    \begin{pmatrix}
     1 & 0 & -\tau u_t \sin(\reference \theta_t) \\ 
     0 & 1 & \tau  u_t \cos(\reference \theta_t) \\ 
     0 & 0 & 1
    \end{pmatrix} \\
    B^*_t &=
    \frac{\partial f}{\partial \command}\bigg|_{\reference \state_t, \reference \command_t, 0} =
    \frac{\partial f}{\partial \noisemodel}\bigg|_{\reference \state_t, \reference \command_t, 0} =
    \tau
    \begin{pmatrix}
     \cos(\reference \theta_t) & 0 \\
     \sin(\reference \theta_t) & 0 \\
     0 & 1 
    \end{pmatrix}
\end{align*}
and the updated control law reads: 
\begin{equation}
    \command_{t}=\reference \command_t + L_t (\estimate \state_t - \reference \state_t)
    \label{eq:control}
\end{equation}
where the gains $L_t$ are computed through the following backwards Riccati equation \eqref{eq:LQ_Riccati_gain}:

\begin{align}
    S_l &= C \\
    L_t &= -({B^*_t}^T S_t B^*_t + D)^{-1} {B^*_t}^T S_t A^*_t \\ 
    S_{t} &= C + {A^*_{t+1}}^T S_{t+1} A^*_{t+1} + {A^*_{t+1}}^T S_{t+1} B^*_{t+1} L_{t+1} 
    \label{eq:LQ_Riccati_gain}
\end{align}

One noticeable characteristic of the conventional LQG is that the linearized matrices depend on the trajectory through the estimated orientation $\estimate \theta_t$ for the observer, and the reference orientation $\reference \theta_t$ for the controller. This feature is illustrated on the diagram of Figure \ref{fig:ekf_diagram}. This is in sharp contrast with the case of linear systems, and might be a cause of divergence of the closed-loop system. In the next section, we design an invariant LQG observer-controller, for which the linearized matrices only depend on the inputs, and this will be shown to increase the robustness compared to the conventional approach.

\begin{figure}[h]
    \centering
    \includegraphics[width=0.8\linewidth]{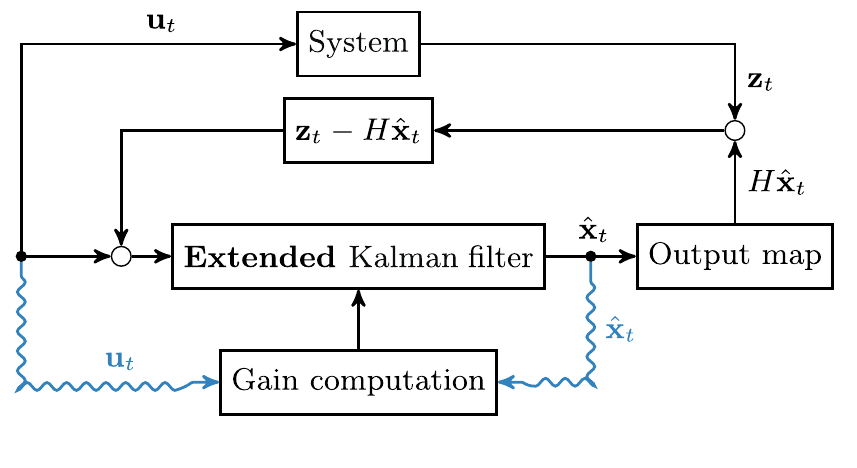}
    \caption{The gain computation of the EKF depends directly on the last estimate.}
    \label{fig:ekf_diagram}
\end{figure}

\section{The invariant LQG controller}
\label{sec:invariant}
In this section we define a LQG observer-controller with invariant properties.
The design builds upon the following remark \cite{Guillaume98}: the dynamics \eqref{eq:dynamics} are invariant to rotations and translations, that is, they do not depend on the choice of frame.
\begin{proposition}
    The dynamics \eqref{eq:dynamics} are invariant to rotations and translations.
    Consequently the LQG equations for \eqref{eq:dynamics} do not depend on the choice of coordinates in the following sense:
let $\left( \begin{smallmatrix} x \\ y \\ \theta \end{smallmatrix} \right) = \left( \begin{smallmatrix} x^0 \\ y^0 \\ 0 \end{smallmatrix} \right) + \left( \begin{smallmatrix} X\cos\theta^0 - Y\sin\theta^0 \\ Y\sin\theta^0 + X\cos\theta^0 \\ \Theta + \theta_0 \end{smallmatrix} \right)$ be a change of coordinates from a reference system of coordinates $\left( \begin{smallmatrix} X \\ Y \\ \Theta \end{smallmatrix} \right)$ to a rotated and translated system of coordinates $\left( \begin{smallmatrix} x \\ y \\ \theta \end{smallmatrix} \right)$, then the equations \eqref{eq:dynamics} write the same when written with the transformed variables.\end{proposition}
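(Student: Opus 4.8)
The plan is to read the displayed change of coordinates as the action of the group $SE(2)$ of planar rigid motions: on the position $(x,y)$ it acts by the rotation $R(\theta^0)=\left(\begin{smallmatrix}\cos\theta^0 & -\sin\theta^0\\ \sin\theta^0 & \cos\theta^0\end{smallmatrix}\right)$ followed by the constant translation $(x^0,y^0)$, while on the heading it acts by the shift $\theta=\Theta+\theta^0$. (For this the second position component should be read as $X\sin\theta^0+Y\cos\theta^0$, so that the position map is a genuine rotation; with this reading the displayed formula is exactly the standard $SE(2)$ action, and I will use it in that form.) The point I want to exploit is that the inputs $u_t,\omega_t$ and the noises $v,w$ are \emph{body-frame} scalars: they are added directly to the scalar forward speed and the scalar turn rate, hence are untouched by a change of world frame. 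Likewise $(x^0,y^0)$ and $\theta^0$ are constant in $t$.

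First I would assume $(X_t,Y_t,\Theta_t)$ satisfies \eqref{eq:dynamics} with inputs $u_t,\omega_t$ and noises $v,w$, and compute the increment of the transformed position. Since the translation is constant it cancels in the difference, and linearity of $R(\theta^0)$ gives
\begin{equation*}
\begin{pmatrix} x_{t+1}-x_t \\ y_{t+1}-y_t \end{pmatrix}
= R(\theta^0)\begin{pmatrix} X_{t+1}-X_t \\ Y_{t+1}-Y_t \end{pmatrix}
= \tau(u_t+v)\,R(\theta^0)\begin{pmatrix}\cos\Theta_t\\ \sin\Theta_t\end{pmatrix}.
\end{equation*}
The key identity is the composition of rotations, $R(\theta^0)\left(\begin{smallmatrix}\cos\Theta_t\\ \sin\Theta_t\end{smallmatrix}\right)=\left(\begin{smallmatrix}\cos(\Theta_t+\theta^0)\\ \sin(\Theta_t+\theta^0)\end{smallmatrix}\right)=\left(\begin{smallmatrix}\cos\theta_t\\ \sin\theta_t\end{smallmatrix}\right)$, where the last step uses $\theta_t=\Theta_t+\theta^0$. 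The right-hand side then becomes $\tau(u_t+v)(\cos\theta_t,\sin\theta_t)^T$, which is precisely the first two equations of \eqref{eq:dynamics} in the transformed variables. For the heading, constancy of $\theta^0$ yields $\theta_{t+1}-\theta_t=\Theta_{t+1}-\Theta_t=\tau(\omega_t+w)$, again the third equation of \eqref{eq:dynamics}. Hence the dynamics are form-invariant.

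I expect the computation to be short, so the difficulty is bookkeeping rather than a genuine obstacle; the three things one must not get wrong are (i) reading the position map as a true rotation, since it is the angle-addition identity that makes the body heading and the frame rotation combine correctly, (ii) recognizing that $u,\omega,v,w$ are invariant because they live in the body frame, so no transformation of the inputs or of the process-noise covariance $M$ is needed, and (iii) using time-constancy of $\theta^0$ so it drops out of the heading increment. Finally, regarding the ``consequently'' clause, I would stress that the phrase ``in the following sense'' pins the rigorous content of the statement to the form-invariance of \eqref{eq:dynamics} just established; what makes this symmetry \emph{usable} at the controller level is that the output map $H$ and the isotropic measurement covariance $N=\lambda I_2$ (with $RNR^T=N$ for every rotation $R$) are equivariant as well, so the linearized matrices of \eqref{eq:controller_conventional_LQ} and of the EKF depend on the trajectory only through the heading, which shifts by the constant $\theta^0$ — the observation that the next section turns into a genuinely coordinate-independent, invariant LQG.
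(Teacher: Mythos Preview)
Your proof is correct and follows essentially the same route as the paper's: assume $(X_t,Y_t,\Theta_t)$ satisfies \eqref{eq:dynamics}, push the transformed variables through, and use the angle-addition identities to recover \eqref{eq:dynamics} in $(x_t,y_t,\theta_t)$. The only cosmetic difference is that you package the position computation with the rotation matrix $R(\theta^0)$ and the identity $R(\theta^0)(\cos\Theta_t,\sin\Theta_t)^T=(\cos\theta_t,\sin\theta_t)^T$, whereas the paper writes the two components out separately; you also correctly flag that the second position entry in the displayed change of coordinates should be read as $X\sin\theta^0+Y\cos\theta^0$, which is indeed the form the paper silently uses in its own proof.
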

\begin{proof}
\footnotesize
\begin{align*}
\text{If}
&\begin{cases}
    X_{t+1} = X_t + \tau(u_t + v)\cos(\Theta_t) \\
    Y_{t+1} = Y_t + \tau(u_t + v)\sin(\Theta_t) \\
    \Theta_{t+1} = \Theta_t + \tau(\omega_t + w)
\end{cases}
\text{Then} \\
&\begin{cases}
    x_{t+1}  \triangleq X^0 + X_{t+1} \cos(\Theta^0) - Y_{t+1} \sin(\Theta^0) \\
    y_{t+1}  \triangleq Y^0 + X_{t+1} \sin(\Theta^0) + Y_{t+1} \cos(\Theta^0) \\
    \theta_{t+1}  \triangleq \Theta^0 + \Theta_{t+1}
\end{cases} \\
\Leftrightarrow &\begin{cases}
    x_{t+1} = X^0 + X_t \cos(\Theta^0) - Y_t \sin(\Theta^0) + \tau(u_t+v) \cos(\Theta_t + \Theta^0) \\
    y_{t+1} = Y^0 + X_t \sin(\Theta^0) + Y_t \cos(\Theta^0) + \tau(u_t+v) \sin(\Theta_t + \Theta^0)\\
    \theta_{t+1} = \Theta^0 + \Theta_t + \tau(\omega_t + w)
\end{cases} \\
\Leftrightarrow &\begin{cases}
    x_{t+1} = x_t + \tau(u_t + v) \cos(\theta_t) \\
    y_{t+1} = y_t + \tau(u_t + v) \sin(\theta_t) \\
    \theta_{t+1} = \theta_t + \tau(\omega_t + w)
\end{cases}
\end{align*}
\normalsize
\end{proof}
It seems evident that the ability of an observer-controller to park along the pavement should not depend on whether the pavement is north-oriented or west-oriented. However, surprisingly, the conventional LQG controller \eqref{eq:control} has a behaviour that does depend on orientation of the car ($\reference \theta_t, \estimate \theta_t$). When designing both an observer and a controller, the problem can be remedied by deriving  the observer and the controller equations in the Frénet coordinates (see \cite{rouchon-rudolph-ncn99,arxiv-07}). In the remainder of the paper, we use the superscript $\square^{loc}$ (loc stands for local) to identify the vectors expressed in the Frénet frame, that is, a frame attached to the car whose first axis coincides with the car's heading direction $\theta_t$.

\subsection{Invariant Extended Kalman Filter (IEKF)}
\label{sec:invariantobserver}

To estimate the state $\state$ of the robot, we use an invariant formulation of the extended Kalman filter as proposed in \cite{bonnabel_cdc07,bonnabel2009invariant}. Here it boils down to working in the Frénet frame as follows. 
Let us define the estimation error $\disttoestimate \state = \estimate \state - \state$, the local estimation error
$\local{\disttoestimate{\state}} = \Upsilon_{-\angle} \disttoestimate{\state}$,
and the local state deviation $\local{\disttoref{\state}} = \Upsilon_{-\reference{\angle}} \disttoref{\state}$, where 
$ \Upsilon_{\phi} = \left(
\begin{smallmatrix}
    R_{\phi} & 0 \\ 0 & 1
\end{smallmatrix} \right)$.
We can note that, by definition, $\Upsilon$ has the following properties\footnote{Note that those properties can be related to the theory of symmetry groups \cite{olver-book95}. Yet, in the present paper, we prefer to keep calculations at a basic level to remain close to the computer implementation.} that will be used in the sequel: $\Upsilon_0 = I_3$, $\Upsilon_{\phi + \psi} =\Upsilon_\phi \Upsilon_\psi$ and $H \Upsilon_\phi = R_\phi H$. 

We search to estimate the state of the robot using a filter of the following form \cite{arxiv-07,bonnabel_cdc07}:
\begin{align}
    \estimate \state^-_{t+1} &= f(\estimate \state_t, \command_t, 0) \label{eq:Kalman_pred_est_inv} \\
    \estimate \state_{t+1} &= \estimate \state^-_{t+1} +
    \Upsilon_{\estimate \theta^-_{t+1}} K^{inv}_t R_{-\estimate \theta^-_{t+1}}
    (\measurement_{t+1} - H \estimate \state^-_{t+1}) 
    \label{eq:invariant_estimate}
\end{align}
where $R_{-\estimate \theta^-_{t+1}}$ represents the 2D rotation of angle $-\estimate \theta^-_{t+1}$ (the opposite of the third coordinate of $\estimate \state^-_{t+1}$).
The idea behind the proposed filter is merely to map the measurement error $z_{t+1} - H\estimate \state^-_{t+1}$ into the Frénet frame of the estimated car, that is, applying a rotation of angle $-\estimate \theta$, then apply the Kalman correction gain $K$, and finally map the obtained correction term back into the inertial frame through the operator $\Upsilon_{\estimate \theta}$.

The evolution of the local estimation error writes:
\begin{align*}
    \local{\disttoestimate \state_{t+1}} =
    &\Upsilon_{- \theta_{t+1}}(\estimate \state_{t+1} - \state_{t+1}) \\
    =&\Upsilon_{-\theta_{t+1}}
    \Big( f(\estimate \state_t, \command_t, 0) - f(\state_t, \command_t, \noisemodel) +  \\
    &\Upsilon_{\estimate \theta^-_{t+1}} K^{inv}_{t+1} R_{-\estimate \theta^-_{t+1}}
    \left(\measurement_{t+1} - Hf(\estimate \state_t, \command_t, 0)\right)\Big) \\
    \approx&\Upsilon_{-\theta_{t+1}}
    \frac{\partial f}{\partial \state}
    \bigg|_{\estimate \state_t, \command_t, 0}
    \disttoestimate \state_t \\
    &-\Upsilon_{-\theta_{t+1}}
    \frac{\partial f}{\partial \noisemodel}
    \bigg|_{\estimate \state_t, \command_t, 0} \noisemodel \\
    &+\Upsilon_{\disttoestimate \theta_t - \tau w} K^{inv}_t R_{-\estimate \theta_t -\tau \omega_t}
    \left(\measurement_{t+1} - Hf(\estimate \state_t, \command_t, 0)\right) \\
\end{align*}

For sufficiently small $\tau$ and considering noises as first order terms, we have up to second order terms:
\begin{align*}
        \Upsilon_{-\theta_{t+1}}\frac{\partial f}{\partial \state}\bigg|_{\estimate \state_t, \command_t, 0} &= 
        \Upsilon_{-\tau (\omega_t + w)} \Upsilon_{-\theta_t} \frac{\partial f}{\partial \state}\bigg|_{\estimate \state_t, \command_t, 0} \\
        &= \Upsilon_{-\tau (\omega_t + w)}
        \begin{pmatrix}
            \cos \theta_t & \sin \theta_t & 0 \\
            -\sin \theta_t & \cos \theta_t & \tau u_t \\
            0 & 0 & 1
        \end{pmatrix} \\
        &= \begin{pmatrix}
                 1 & \tau \omega_t & 0 \\
                 -\tau \omega_t & 1 & \tau u_t \\
                 0 & 0 & 1
            \end{pmatrix}\Upsilon_{-\theta_t} \\
        &= A_t \Upsilon_{-\theta_t} \\
        \Upsilon_{-\theta_{t+1}}\frac{\partial f}{\partial \noisemodel}\bigg|_{\estimate \state_t, \command_t, 0} &= 
        \Upsilon_{-\tau (\omega_t + w)} \Upsilon_{-\theta_t} \frac{\partial f}{\partial \noisemodel}\bigg|_{\estimate \state_t, \command_t, 0} \\
        &= \tau
        \begin{pmatrix}
            1 & 0 \\
            0 & 0 \\
            0 & 1
        \end{pmatrix}\\
        &= B
\end{align*}
Likewise, we have up to second order terms:
\begin{align*}
    &\Upsilon_{\disttoestimate \theta_t - \tau w}K^{inv}_t R_{-\estimate \theta_{t} - \tau \omega_t} \left(\measurement_{t+1} - Hf(\estimate \state_t, \command_t, 0)\right) \\
    = &K^{inv}_t H \Upsilon_{-\estimate \theta_{t} - \tau \omega_t}\left(
        -\frac{\partial f}{\partial \state}\bigg|_{\estimate \state_t, \command_t, 0} \tilde \state_t
    +\frac{\partial f}{\partial \noisemodel}\bigg|_{\estimate \state_t, \command_t, 0} \noisemodel \right) \\
    &+ K^{inv}_t \noisemeas_{t+1} \\
    = &K^{inv}_t H \left(-A_t \Upsilon_{-\theta_t} \disttoestimate \state_t + B \noisemodel \right) + K^{inv}_t \noisemeas_{t+1}
\end{align*}

Finally, up to second order terms, $\local{\disttoestimate \state_t}$ follows the linear evolution:
\begin{align} \label{eq:invariant_observer}
    \local{\disttoestimate \state}_{t+1} = &A_t \local{\disttoestimate \state}_t - B \noisemodel \nonumber \\
    &- K^{inv}_t H \left(A_t \local{\disttoestimate \state}_t - B \noisemodel \right) +  K^{inv}_t \noisemeas_t
\end{align}
with:
$$A_t = 
        \begin{pmatrix}
             1 & \tau \omega_t & 0 \\
             -\tau \omega_t & 1 & \tau u_t \\
             0 & 0 & 1
        \end{pmatrix}, \quad
B = \tau
    \begin{pmatrix}
        1 & 0 \\
        0 & 0 \\
        0 & 1
    \end{pmatrix}$$
and where with a slight abuse of notation we replaced $R_{-\theta_t}\noisemeas_t$ with $\noisemeas_t$ due to the measurement noise isotropy.
We thus proved, that the invariant linearized estimation error $\local{\disttoestimate \state_t}$ follows a linear equation for which the optimal gain $K^{inv}_t$ is given by the Kalman updates:\\
\emph{Process update (invariant Kalman):}
\begin{equation}
    P^-_{t + 1} = A_t P_t A_t^T + B M B^T \label{eq:Kalman_pred_cov_inv}
\end{equation}
\emph{Measurement update (invariant Kalman):}
 \begin{align}
     K^{inv}_{t+1} &= P^{loc-}_{t+1} H^T ( H P^{loc-}_{t+1} H^T + N_t )^{-1} \label{eq:Kalman_gain_inv}\\
    \local P_{t+1} &= (I-K^{inv}_{t+1} H)P^{loc-}_{t+1} \label{eq:Kalman_updt_cov_inv}
\end{align}
Finally, the invariant Kalman estimate is given by equation \eqref{eq:invariant_estimate} where $K^{inv}_t$ is computed by the above formula \eqref{eq:Kalman_gain_inv}.
We can notice that, as a byproduct of making use of the symmetries of the problem, we derived a linearized equation for the observer in the Frénet coordinates, in which $B$ does not depend on the trajectory at all, whereas $A_t$ only depends on the control inputs:
\begin{proposition}
    The linearized equation of the invariant Kalman filter in the Frénet coordinates is:
    $\local{\disttoestimate \state}_{t+1} = A_t \local{\disttoestimate \state}_t - B \noisemodel - K^{inv}_t H \left(A_t
    \local{\disttoestimate \state_t} - B \noisemodel \right) + K^{inv}_t \noisemeas_t$ where $B$ and $A_t=A(\command_t)$ do not depend on the state $\estimate\state_t$.
    \end{proposition}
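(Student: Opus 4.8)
The plan is to propagate the local estimation error $\local{\disttoestimate\state}_t = \Upsilon_{-\theta_t}\disttoestimate\state_t$ through a single filter step and show that, after a first-order expansion in the error and the noise, every explicit dependence on orientation cancels, leaving matrices that depend only on the command $\command_t$. First I would form the raw error $\disttoestimate\state_{t+1} = \estimate\state_{t+1} - \state_{t+1}$ by inserting the propagation \eqref{eq:Kalman_pred_est_inv}, the correction \eqref{eq:invariant_estimate}, and the true dynamics \eqref{eq:dynamics}, then left-multiply by $\Upsilon_{-\theta_{t+1}}$ to express everything in the Frénet frame at time $t+1$. Taylor-expanding $f(\estimate\state_t,\command_t,0) - f(\state_t,\command_t,\noisemodel)$ to first order in $\disttoestimate\state_t$ and $\noisemodel$ splits the recursion into three pieces: a dynamics block $\Upsilon_{-\theta_{t+1}}\frac{\partial f}{\partial\state}\big|_{\estimate\state_t}\disttoestimate\state_t$, a process-noise block $\Upsilon_{-\theta_{t+1}}\frac{\partial f}{\partial\noisemodel}\big|_{\estimate\state_t}\noisemodel$, and a correction block assembled from $K^{inv}_t$ and the innovation.

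The step I expect to be the main obstacle is collapsing the orientation out of each block. Using $\theta_{t+1}=\theta_t+\tau(\omega_t+w)$ and the group property $\Upsilon_{\phi+\psi}=\Upsilon_\phi\Upsilon_\psi$, I would factor $\Upsilon_{-\theta_{t+1}} = \Upsilon_{-\tau(\omega_t+w)}\Upsilon_{-\theta_t}$ and prove the key identity $\Upsilon_{-\theta_t}\frac{\partial f}{\partial\state}\big|_{\estimate\state_t} = A(\command_t)\,\Upsilon_{-\theta_t}$ up to second order. The mechanism is that the Jacobian's trigonometric entries $\sin\estimate\theta_t,\cos\estimate\theta_t$, once rotated by $\Upsilon_{-\theta_t}$, recombine into $\sin(\theta_t-\estimate\theta_t)$ and $\cos(\theta_t-\estimate\theta_t)$; since $\theta_t-\estimate\theta_t=-\disttoestimate\theta_t$ is itself first order and multiplies $\tau u_t$, those contributions are demoted to second order, and what survives is a matrix in $u_t,\omega_t$ alone. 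The delicate point is precisely that the Jacobian is evaluated at the \emph{estimate} $\estimate\state_t$ while the frame rotation uses the \emph{true} orientation $\theta_t$; I must check that this mismatch, together with the small-$\tau$ and first-order-noise approximations, only perturbs the identity at second order. The identical computation reduces $\Upsilon_{-\theta_{t+1}}\frac{\partial f}{\partial\noisemodel}\big|_{\estimate\state_t}$ to the constant matrix $B$.

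For the correction block I would use the property $H\Upsilon_\phi = R_\phi H$ to slide the rotation $\Upsilon_{\disttoestimate\theta_t-\tau w}$ through $H$ and cancel it against $R_{-\estimate\theta_t-\tau\omega_t}$, so the innovation collapses to $K^{inv}_t H(-A_t\local{\disttoestimate\state}_t + B\noisemodel) + K^{inv}_t\noisemeas_t$, again with no surviving orientation. Collecting the three blocks reproduces exactly \eqref{eq:invariant_observer}, and reading off $A_t = A(\command_t)$ and the constant $B$ yields the claimed independence from $\estimate\state_t$. Since the resulting recursion in $\local{\disttoestimate\state}_t$ is linear with Gaussian inputs, the variance-minimizing gain $K^{inv}_t$ is the usual Kalman gain built from $(A_t,B)$, namely \eqref{eq:Kalman_gain_inv}, which closes the argument.
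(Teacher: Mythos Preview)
Your proposal is correct and follows the paper's derivation essentially step for step: linearize, rotate into the Fr\'enet frame via $\Upsilon_{-\theta_{t+1}}=\Upsilon_{-\tau(\omega_t+w)}\Upsilon_{-\theta_t}$, and use the intertwining $H\Upsilon_\phi=R_\phi H$ together with the first-order smallness of $\disttoestimate\theta_t$ to strip all orientation dependence from each block. One minor imprecision in the correction block: the factor $\Upsilon_{\disttoestimate\theta_t-\tau w}$ sits to the \emph{left} of $K^{inv}_t$, so it cannot be ``slid through $H$'' or canceled against the rotation on the right; rather, since its angle is first-order and it multiplies an already first-order innovation, it is simply replaced by the identity up to second order, while it is the $R_{-\estimate\theta_t-\tau\omega_t}$ on the right that is commuted through $H$ (and the residual $R_{-\theta}\noisemeas$ is identified with $\noisemeas$ by the isotropy assumption on $N$).
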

    The property is illustrated by the diagram of Figure \ref{fig:iekf_diagram}.
    We will show experimentally in Section \ref{sec:simulation_LQG} that this property will endow the invariant LQG with better robustness to high noises and erroneous initialization, compared to the conventional LQG. This can be easily  understood as the gain output by the IEKF around \emph{any} trajectory is the same as the one output about the \emph{true} trajectory. Although it does not ensure the gain is optimal, it prevents  the type of divergences due to  a positive feedback between a misestimate and an inappropriate gain as explained in the Introduction. 

\begin{figure}[h]
    \centering
    \includegraphics[width=0.8\linewidth]{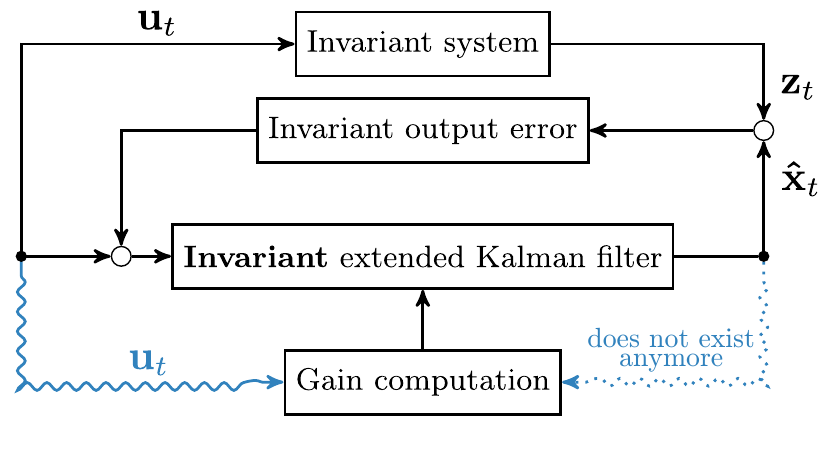}
    \caption{Contrarily to the EKF, the gain computation does not directly depend on the last estimate of the state in the case of the IEKF.}
    \label{fig:iekf_diagram}
\end{figure}

\subsection{Invariant LQ}
\label{sec:invariantcontroller}

Likewise, we can rewrite the dynamics of the reference trajectory error $\disttoref \state$ in the Frénet coordinates, to get a linearized invariant formulation of the LQ controller's equations.
We define the local error to the reference $\local{\disttoref \state} = \Upsilon_{-\reference \theta} \disttoref \state$ and differentiate, neglecting the second order terms:
\begin{align*}
        \local{\disttoref \state}_{t+1} = &\Upsilon_{-\theta_{t+1}} \left(f(\state_t, \command_t, \noisemodel) - f(\reference \state_t, \reference \command_t, 0)\right) \\
        = &\Upsilon_{-\theta_{t+1}}\frac{\partial f}{\partial \state}\bigg|_{\reference \state_t, \reference \command_t, 0} \disttoref \state_t\\
        &+ \Upsilon_{-\theta_{t+1}} \frac{\partial f}{\partial \command}\bigg|_{\reference \state_t, \reference \command_t, 0} \disttoref \command_t \\
        &+ \Upsilon_{-\theta_{t+1}} \frac{\partial f}{\partial \noisemodel}\bigg|_{\reference \state_t, \reference \command_t, 0} \noisemodel \\
        \approx &A^*_t \local{\disttoref \state_t} + B \disttoref \command_t + B \noisemodel \numberthis \label{eq:invariant_controller}
\end{align*}
with:
$$\reference A_t = \begin{pmatrix}
            1 & \tau \reference \omega_t & 0 \\
            -\tau \reference \omega_t & 1 & \tau \reference u_t \\
            0 & 0 & 1
\end{pmatrix}$$
Once again, the linearized matrices of the controller do not depend on the reference path $(\reference \state_t)_{t=1 \ldots n}$, as $\reference A_t$ depends only on the inputs.
\begin{proposition}
    The linearized equation of the invariant LQ controller in the Frénet coordinates is $\local{\disttoref \state}_{t+1} = A^*_t \local{\disttoref \state_t} + B \disttoref \command_t + B \noisemodel$ where $B$ is constant and $A^*_t$ only depends on the inputs. 
\end{proposition}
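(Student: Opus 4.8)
The plan is to replay, essentially verbatim, the derivation just carried out for the IEKF, with two changes: the linearization is performed about the reference trajectory $(\reference\state_t,\reference\command_t,0)$ instead of the estimated one, and an additional control channel $\disttoref\command_t$ is now present. First I would start from the definition $\local{\disttoref\state}=\Upsilon_{-\reference\theta}\disttoref\state$ with $\disttoref\state=\state-\reference\state$, and use that the reference obeys $\reference\state_{t+1}=f(\reference\state_t,\reference\command_t,0)$ with zero noise to write $\local{\disttoref\state}_{t+1}=\Upsilon_{-\reference\theta_{t+1}}\bigl(f(\state_t,\command_t,\noisemodel)-f(\reference\state_t,\reference\command_t,0)\bigr)$.

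Next I would Taylor-expand $f(\state_t,\command_t,\noisemodel)$ about $(\reference\state_t,\reference\command_t,0)$ to first order, producing three terms driven respectively by the Jacobians $\partial f/\partial\state$, $\partial f/\partial\command$ and $\partial f/\partial\noisemodel$ evaluated at the reference. By the definitions recalled in the conventional-LQ section these equal $A^*_t$, $B^*_t$ and $B^*_t$ (the command and noise Jacobians coincide), acting on $\disttoref\state_t$, $\disttoref\command_t$ and $\noisemodel$ respectively.

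The crux is to show that pre-multiplying these Jacobians by $\Upsilon_{-\reference\theta_{t+1}}$ cancels all dependence on the reference orientation. Using the morphism property $\Upsilon_{\phi+\psi}=\Upsilon_\phi\Upsilon_\psi$ and the noise-free kinematics $\reference\theta_{t+1}=\reference\theta_t+\tau\reference\omega_t$, I would factor $\Upsilon_{-\reference\theta_{t+1}}=\Upsilon_{-\tau\reference\omega_t}\Upsilon_{-\reference\theta_t}$ and establish the intertwining identity
\[
\Upsilon_{-\reference\theta_{t+1}}\,\frac{\partial f}{\partial \state}\big|_{\reference\state_t,\reference\command_t,0}=\reference A_t\,\Upsilon_{-\reference\theta_t},
\]
with $\reference A_t$ the claimed matrix depending only on $(\reference u_t,\reference\omega_t)$. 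Concretely, $\Upsilon_{-\reference\theta_t}$ applied to the conventional state Jacobian already collapses the trigonometric entries, leaving $\tau\reference u_t$ in the $(2,3)$ slot and an isolated factor $\Upsilon_{-\reference\theta_t}$ on the right; the residual small rotation $\Upsilon_{-\tau\reference\omega_t}$ then supplies, to first order in $\tau$, the skew entries $\pm\tau\reference\omega_t$, assembling $\reference A_t$. For the input and noise channels, since $B^*_t$ already carries a factor $\tau$, to first order $\Upsilon_{-\reference\theta_{t+1}}B^*_t\approx\Upsilon_{-\reference\theta_t}B^*_t$, and the rotation cancels against the orientation dependence of $B^*_t$ to give the constant $B=\tau\left(\begin{smallmatrix}1&0\\0&0\\0&1\end{smallmatrix}\right)$.

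Assembling the three pieces yields $\local{\disttoref\state}_{t+1}=A^*_t\local{\disttoref\state}_t+B\disttoref\command_t+B\noisemodel$ with $B$ constant and $A^*_t$ a function of $(\reference u_t,\reference\omega_t)$ only, which is the statement. I expect the only real obstacle to be order-of-magnitude bookkeeping rather than any conceptual difficulty: one must justify replacing $\Upsilon_{-\theta_{t+1}}$ by $\Upsilon_{-\reference\theta_{t+1}}$ (legitimate because their discrepancy $\Upsilon_{-\disttoref\theta_{t+1}}=I+O(\disttoref\theta_{t+1})$ multiplies the already first-order $\disttoref\state_{t+1}$, so the error is second order), and consistently discard $O(\tau^2)$ products such as $\tau\reference\omega_t\cdot\tau\reference u_t$ while retaining the first-order couplings $\tau\reference\omega_t$. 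Because this is the deterministic, noise-free analogue of the IEKF computation already validated in the previous subsection, I would invoke that calculation and only spell out the two modifications.
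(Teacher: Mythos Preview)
Your proposal is correct and follows essentially the same approach as the paper: define the local tracking error via $\Upsilon_{-\reference\theta}$, Taylor-expand $f$ about the reference, and then reuse the rotation-intertwining identities from the IEKF derivation to collapse the orientation dependence into the constant $B$ and the input-only matrix $\reference A_t$. The paper's own derivation is actually terser than yours (it writes the expansion and jumps directly to the result, implicitly invoking the IEKF computation), and it uses $\Upsilon_{-\theta_{t+1}}$ rather than $\Upsilon_{-\reference\theta_{t+1}}$ without comment; your explicit remark that the discrepancy is second order is a welcome clarification rather than a deviation.
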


We can then apply a LQ control policy to this linearized system, which minimizes the quadratic cost function $J(\state, \command):$
\begin{equation}
    J(\state, \command) = \mathbb E\left(\sum_{t=0}^n (\local{\disttoref \state_t} C (\local{\disttoref \state_t})^T
    + \disttoref \command_t D \disttoref \command_t^T) \right)
    \label{eq:inv_cost}
\end{equation}
under the constraints \eqref{eq:invariant_controller}.
The minimal cost is obtained for $\disttoref \command_t = L^{inv}_t \local{\disttoref \state_t}$ with $L^{inv}_t$ given by the solution to the backwards Riccati equation \eqref{eq:LQ_Riccati_gain_inv}:
\begin{align}
    S_l &= C \\
    L^{inv}_t &= -(B^T S_t B + D)^{-1} B^T S_t \reference A_t \\ 
    S_{t} &= C + A_{t+1}^{*T} S_{t+1} \reference A_{t+1} + A_{t+1}^{*T} S_{t+1} B L^{inv}_{t+1} 
    \label{eq:LQ_Riccati_gain_inv}
\end{align}
However, in presence of measurement uncertainty, the true state $\state$ is unknown. Consequently, the control policy applied in practice is:
\begin{equation}
    \disttoref \command_t = L^{inv}_t \Upsilon_{-\reference \theta_t}(\estimate \state_t - \reference \state_t)
    \label{eq:control_inv}
\end{equation}
%
%

\subsection{Invariant LQG}
Finally, the invariant Kalman filter and the invariant LQ can be combined, in order to compute on-line the (approximate) best input given the current estimation, covariance and the latest input and measurement.
The algorithm steps are summarized in Algorithm \ref{alg:inv_LQG}.
\begin{algorithm}[h]
    \caption{Invariant LQG}
    \label{alg:inv_LQG}
    \begin{algorithmic}
        \REQUIRE Reference trajectory $\left( (\state_t, \command_t) \right)_{t=1\dots n}$
        \REQUIRE Initial covariance $P_0$
        \REQUIRE Off-line calculations of the Riccati gains \eqref{eq:LQ_Riccati_gain_inv}
        \STATE{$\command_0 \leftarrow \reference \command_0$}
        \STATE{$\estimate \state_0 \leftarrow \reference \state_0$}
        \FOR{$0 < t \leq n$}
        \STATE{
            \begin{itemize}
                \item propagate estimation and covariance with \eqref{eq:Kalman_pred_est_inv} and \eqref{eq:Kalman_pred_cov_inv} \\
                \item acquire measurement $\measurement_t$\\
            \item update the best estimate and its covariance using \eqref{eq:invariant_estimate} and \eqref{eq:Kalman_updt_cov_inv}\\
                \item output new command $\command_t$ computed by \eqref{eq:control_inv}
            \end{itemize}
        }
        \ENDFOR
    \end{algorithmic}
\end{algorithm}

\section{Illustration of the robustness property through extensive simulations}
\label{sec:simulation_LQG}
Beyond the fact that it is natural to use a closed-loop control that does not depend upon a non-trivial choice of frame orientation, the IEKF is known to have some convergence guaranteed properties (see \cite{bonnabel_cdc07,bonnabel2009invariant, arxiv-08}) about trajectories defined by constant inputs: indeed for fixed $\reference u_t,\reference \omega_t$ we see that the linearized observation and control systems \eqref{eq:invariant_observer} and \eqref{eq:invariant_controller} become time-invariant, leading to convergence of the gain matrices. However for arbitrary reference trajectories on the one hand,  and large noises that potentially make the observer-controller step out of the  region where the linearization is valid on the other hand, the robustness of the IEKF has never been proved. In this section we show through simulations how the invariant LQG can exhibit increased robustness to noise and initial uncertainties compared to the conventional LQG.
The simulations were performed using a reference trajectory composed of straight lines and curves, displayed on Figure \ref{fig:lost}.

We considered a reference initial covariance $P^0_0$ and reference model and measurement noise covariances $M^0$ and $N^0$.
We compared the performances of the invariant LQG and the conventional LQG by performing several simulations with initial covariance $\alpha^2P^0_0$ and noises covariances $\beta^2M^0, \beta^2N^0$, for various factors $(\alpha^2, \beta^2)$.
For each simulation $(\alpha^2, \beta^2)$ is fixed and we draw 5,000 random initial positions $(x^0_i)_{i=1 \ldots 5000}$ and 5,000 noise samples $((\noisemodel_0, \ldots,\noisemodel_n)_i, (\noisemeas_0, \ldots,\noisemeas_n)_i)_{i=1 \ldots 5000}$, 
Each sample $i$ is used to simulate one robot trajectory using the invariant LQG observer-controller and one robot trajectory using the conventional LQG.
In total we have 10,000 simulated trajectories, half of them using invariant LQG, and the other half using conventional LQG.
For each simulated trajectory we evaluate the cost $I$ as defined in \eqref{eq:cost}.
The results are displayed on Figure \ref{fig:LQGcomparison}, on which we plot:
\begin{enumerate}
    \item The mean costs for invariant and conventional LQG in function of the noise factors $(\alpha^2, \beta^2)$. For each noise factors couple, we also indicate above the bars and between parentheses the percentage of draws for which using the invariant LQG leads to a lower trajectory cost.
    \item The number of trajectories that we can consider as \enquote{lost}. A trajectory is considered \enquote{lost} when the Mahalanobis distance between its final state and the final estimate exceeds a given threshold.
        The retained criterion used to label a trajectory as \enquote{lost} is: 
        \begin{equation}
            (x_n - \estimate x_n, y_n - \estimate y_n)
        P_{n_{[1:2,1:2]}}^{-1} \left( \begin{smallmatrix} x_n - \estimate x_n \\ y_n - \estimate y_n \end{smallmatrix} \right) > F_2^{-1}(0.999)
            \label{eq:lost}
        \end{equation}
        This Mahalanobis distance under the linear and Gaussian assumption follows a $\chi_2^2$ distribution with $2$ degrees of freedom, and  $F_2$ denotes its cumulative distribution function, so that the threshold should not be exceeded for  $99,9\%$ of the trajectories on average.
        In practice a trajectory is \enquote{lost} when the robot's state went outside of the \enquote{tube} inside which the linearization is valid.
        In this case the LQG control cannot be trusted anymore.
\end{enumerate}
\begin{figure}[h]
    \centering
    \includegraphics[width=\linewidth]{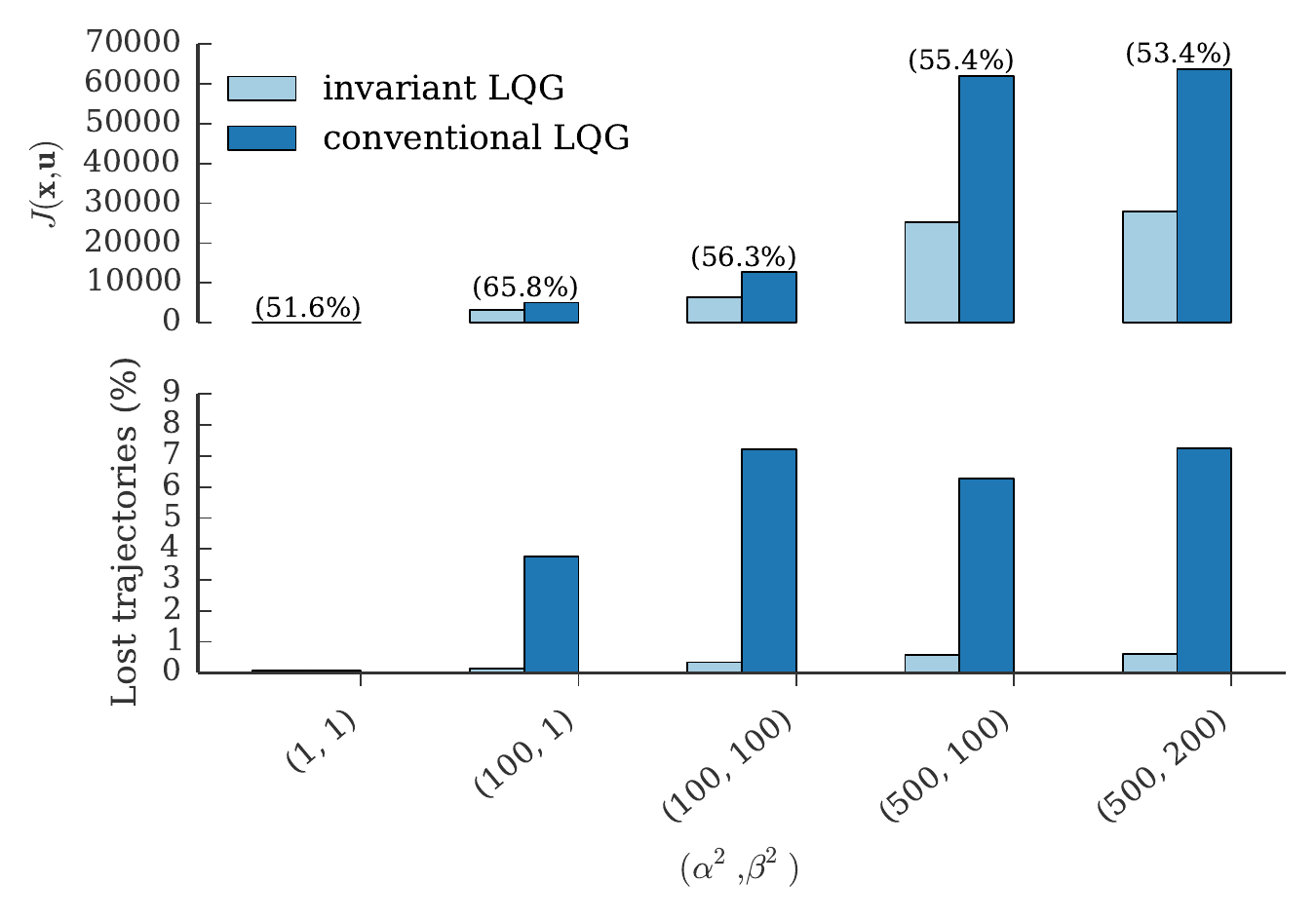}
    \caption{(above) The empirical cost $J(\state, \command)$ obtained for the invariant LQG is up to twice better than the corresponding conventional LQG cost in case of high noises. The percentages between parentheses indicate the proportion of samples for which the invariant LQG leads to a lower cost. (below) The number of \enquote{lost} trajectories (according to criterion \eqref{eq:lost}) is greatly reduced when using the invariant LQG in the case of high noises.}
    \label{fig:LQGcomparison}
\end{figure}

We can draw two main conclusions from these simulations.
First the trajectory cost is on average lower when the invariant LQG is used.
This becomes more significant as the initial covariance increases.
When the initial covariance is very high ($\alpha^2 \geq 100$), the mean cost of invariant LQG trajectories is about twice lower than the corresponding conventional LQG mean cost.
The percentage of samples for which the invariant LQG trajectory has a lower cost than its conventional LQG counterpart also increases with $\alpha^2$.
The influence of the noise levels $\beta^2$ is less significant.

The second bar chart of  Fig. \ref{fig:LQGcomparison} plots the number of \enquote{lost} trajectories for different initial covariance and noises covariances levels.
The number of \enquote{lost} trajectories increases with both $\alpha^2$ and $\beta^2$.
However, in case of high $\alpha^2$ and $\beta^2$, the invariant LQG observer-controller is much less prone to losing the reference trajectory than the conventional LQG.
These \enquote{lost} trajectories have a very high cost insofar as they are very \enquote{far} from the reference.
Consequently, this explains, at least partly, the gap between the mean costs observed on the first bar chart for high $\alpha^2$ and $\beta^2$.
Figure \ref{fig:lost} shows an example draw for which the invariant LQG manages to follow the reference trajectory, whereas the conventional LQG is completely lost. 
\begin{figure}[h]
    \centering
    \includegraphics{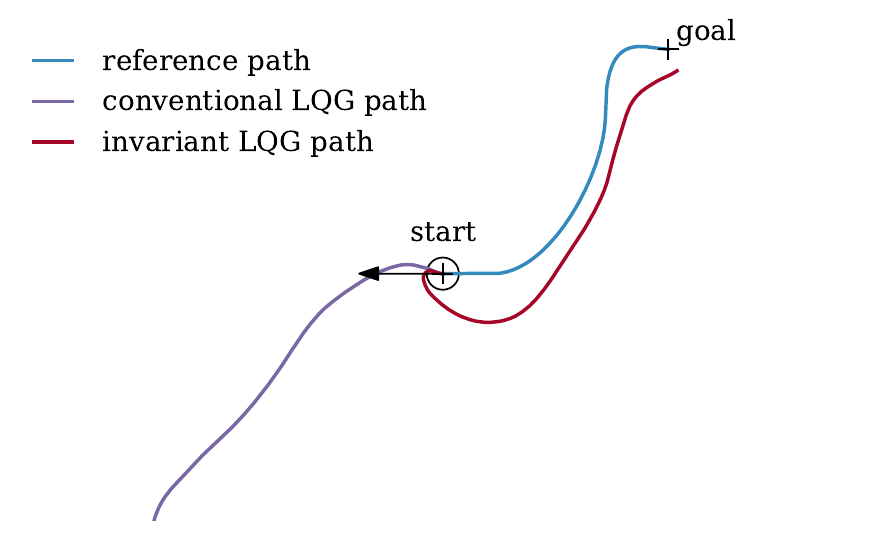}
    \caption{Example of a random draw under strong noises, for which the conventional LQG does not manage to follow the reference trajectory, whereas the invariant LQG does. For this particular draw the initial orientation is opposite to the reference orientation.}
    \label{fig:lost}
\end{figure}
This figure greatly illustrates the increased robustness to uncertain initial condition gained by using the invariant approach.

\section{Consistency of the computed covariance and motion planning}
\label{sec:apriori}

In this section we explore experimentally to what extent the  covariance returned by the observer-controller realistically represents the covariance of the actual discrepancy between the trajectory followed by the robot and a planned trajectory. We prove experimentally that the invariant approach captures more closely the uncertainties than the conventional approach. The main application of those results deal the improvements brought by our methodology for the so-called LQG-MP (motion planning) approach recently introduced in \cite{VanDenBerg11p895}, where the idea is to pick some sensible trajectories based on the uncertainties they convey.   

\subsection{Assessing uncertainty to a planned trajectory}
The idea of LQG-MP \cite{VanDenBerg11p895} is to be able to assess uncertainties to the ability of a closed-loop LQG system to follow various planned trajectories. Many candidate trajectories are generated, and only the trajectories satisfying some criteria are retained (for instance the ones maximizing the probability to reach the goal, or minimizing the probability of collision). A typical example where assessing a level of uncertainty to a planned trajectory may prove useful is displayed on Figure \ref{fig:robust_mp}.
\begin{figure}[h]
    \centering
    \includegraphics[width=0.5\linewidth]{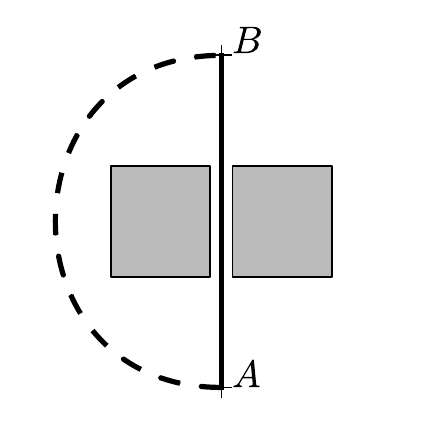}
    \caption{A robust motion planning algorithm will most likely choose the path that bypasses the obstacles (dashed) rather than the one passing in between (plain), in order to avoid any collision when the plan is executed.}
    \label{fig:robust_mp}
\end{figure}

\subsection{A priori probability distributions for an invariant LQG}
\label{subsec:apriori_inv}
In this section, we use the methodology of \cite{VanDenBerg11p895}, to compute the a priori distributions of the state of the robot along a given reference path.
Indeed, the linearization performed in Section \ref{sec:invariant} make possible to analyze in advance how the state of the robot will evolve during the execution of the trajectory if the robot uses an invariant LQG.
The expressions \eqref{eq:invariant_observer} and \eqref{eq:invariant_controller} can be combined to determine, in the Frénet coordinates, the a priori distributions of the state $(\state_t)_{t=1 \dots n}$ around the reference trajectory $(\reference \state_t)_{t=1 \dots n}$ when applying the optimal control $(\disttoref \command_t)_{t=1 \dots n} = (L^{inv}_t \Upsilon_{-\reference \theta_t}(\estimate \state_t - \reference \state_t))_{t=1 \dots n}:$
\begin{align}
    \begin{pmatrix}
        \local{\disttoref{\state}} \\
        \local{\disttoestimate{\state}}
    \end{pmatrix}_{t+1} = &
    \begin{pmatrix}
        \reference{A}_t + BL^{inv}_t & BL^{inv}_t \\
        0 & A_t - K^{inv}_t H A_t
    \end{pmatrix}
    \begin{pmatrix}
        \local{\disttoref{\state}} \\
        \local{\disttoestimate{\state}}
    \end{pmatrix}_t \nonumber \\
    &+ 
    \begin{pmatrix}
        B & 0 \\
        K^{inv}_tHB-B & K^{inv}_t
    \end{pmatrix}
    \begin{pmatrix}
        \noisemodel \\
        \noisemeas_t
    \end{pmatrix}
\end{align}
In the above equation, all the matrices but $A_t$ and $K^{inv}_t$ can be computed in advance, before any actual simulation or measurement.
In order to derive the a priori distributions of the state, we make the approximation $A_t \approx \reference A_t$ and we compute the Kalman gains $K^{inv}_t$ by replacing $A_t$ by $\reference A_t$ in \eqref{eq:invariant_observer}.
\begin{align}
    \begin{pmatrix}
        \local{\disttoref{\state}} \\
        \local{\disttoestimate{\state}}
    \end{pmatrix}_{t+1} = &
    \begin{pmatrix}
        \reference{A}_t + BL^{inv}_t & BL^{inv}_t \\
        0 & \reference{A}_t - K^{inv}_tHA^*_t
    \end{pmatrix}
    \begin{pmatrix}
        \local{\disttoref{\state}} \\
        \local{\disttoestimate{\state}}
    \end{pmatrix}_t \nonumber \\
    &+ 
    \begin{pmatrix}
        B & 0 \\
        K^{inv}_tHB-B & K^{inv}_t
    \end{pmatrix}
    \begin{pmatrix}
        \noisemodel \\
        \noisemeas_t
    \end{pmatrix} \nonumber \\
    = &F_t 
    \begin{pmatrix}
        \local{\disttoref{\state}} \\
        \local{\disttoestimate{\state}}
    \end{pmatrix}_t + G_t \mathbf q_t \\
    \mathbf q_t \sim &\normal(0, Q_t), \quad Q_t =
    \begin{pmatrix}
        M & 0 \\
        0 & N_t
    \end{pmatrix} \nonumber
\end{align}
If the initial state covariance $\mathbb E(\local{\disttoref \state} (\local{\disttoref \state})^T)$ is a known Gaussian of mean $0$ and covariance $P_0,$ and assuming the noises independence, the above formula shows that at each time step $t$, the state's distribution (of the tracking error system) is centered and normal.
Knowing that 
$\mathbb E (\local{\disttoref \state} (\local{\disttoref \state})^T) = P_0,$
$\mathbb E (\disttoestimate \state (\local{\disttoestimate \state})^T) = P_0,$ and
$\mathbb E (\disttoestimate \state (\local{\disttoref \state})^T) = -P_0,$ we can recursively compute the covariance matrices
$\Sigma_t =\mathbb E\left(
    \left(
    \begin{smallmatrix} 
        \local{\disttoref \state} \\
        \local{\disttoestimate \state}
    \end{smallmatrix}
        \right)_t
        \left(
    \begin{smallmatrix} 
        \local{\disttoref \state} \\
        \local{\disttoestimate \state}
    \end{smallmatrix}
    \right)^T_t
\right)$ with the following formula:
\begin{align}
    \Sigma_0 &= \begin{pmatrix}
            P_0 & -P_0 \\
            -P_0 & P_0
          \end{pmatrix} \nonumber \\
    \Sigma_{t+1} &= F_t \Sigma_t F_t^T + G_t Q_t 
    \label{eq:inv_distribution}
\end{align}
The submatrix of $\Sigma_t$ restricted to the three first lines and columns, denoted by ${\Sigma_{[1:3,1:3]}}_t$, is the covariance matrix of the state along the reference path in the Frénet coordinates.
To get the corresponding covariance in the fixed frame, the matrix shall be rotated: $\Upsilon_{\theta_t} {\Sigma_{[1:3,1:3]}}_t \Upsilon_{-\theta_t}$.
Consequently, formula \eqref{eq:inv_distribution}, provides an invariant formulation of the a priori probability distributions of the state about the reference trajectory.

\subsection{A priori probability distributions for a conventional LQG}
\label{subsec:apriori_clas}

The approach advocated in  \cite{VanDenBerg11p895} consists of linearizing both the observer equations and the controller equation about the reference path. For our specific model \eqref{eq:dynamics}, the equations of \cite{VanDenBerg11p895}, become:
\begin{equation}
    \begin{pmatrix}
        \state - \reference \state \\
        \estimate \state - \reference \state
    \end{pmatrix}_{t+1} = F'_t
    \begin{pmatrix}
        \state - \reference \state \\
        \estimate \state - \reference \state
    \end{pmatrix}_{t} + G'_t \mathbf{q}_t
\end{equation}
with:
\begin{align*}
    \quad \mathbf{q}_t &\sim \normal(0, Q_t), \\
    F'_t &= \begin{pmatrix}
              A'_t & B'_t L'_t \\
              K'_t & A'_t + B'_t L'_t - K'_t H A'_t
            \end{pmatrix}, \\
    G'_t &= \begin{pmatrix}
              B'_t & 0 \\
              K'_t H B'_t & K'_t W_t
            \end{pmatrix}
\end{align*}
$$
    A'_t = \begin{pmatrix}
             1 & 0 & -\tau \reference u_t \sin(\reference \theta_t) \\ 
             0 & 1 & \tau \reference u_t \cos(\reference \theta_t) \\ 
             0 & 0 & 1
           \end{pmatrix}, \quad
    B'_t = \begin{pmatrix}
             \tau \cos(\reference \theta_t) & 0 \\
             \tau \sin(\reference \theta_t) & 0 \\
             0 & 1 
           \end{pmatrix}
$$

and $K'_t$ and $L'_t$ are respectively the Kalman and LQ gains for the linearized system:
\begin{equation}
    \label{eq:LQG_linear}
    \disttoref \state_{t+1} = A'_t \disttoref \state_t + B'_t \disttoref \command_t + B'_t \noisemodel
\end{equation}
Finally the prediction for the conventional LQG can be obtained by the following recursion:
\begin{align}
    \Sigma'_0 &= \begin{pmatrix}
                P_0 & 0 \\
                0   & 0
            \end{pmatrix} \nonumber \\
    \Sigma'_{t+1} &= F'_t \Sigma'_t F^{'T}_t + G'_t Q_t G^{'T}_t
    \label{eq:clas_distribution}
\end{align}

As before, the submatrices $3 \times 3$ of $\Sigma'_t$ with indexes inferior to $3$ denoted by ${\Sigma'_{[1:3,1:3]}}_t$ give the state covariance around the reference path.
Contrarily to the invariant formulation, the matrices $B'_t$ depend on the time.
Likewise, the matrices $A'_t$ depend explicitly on the trajectory whereas the corresponding invariant matrices $A_t$ only depend on the inputs.

\subsection{Simulation results}
\begin{figure}[!thb]
    \centering
    \includegraphics[width=\linewidth]{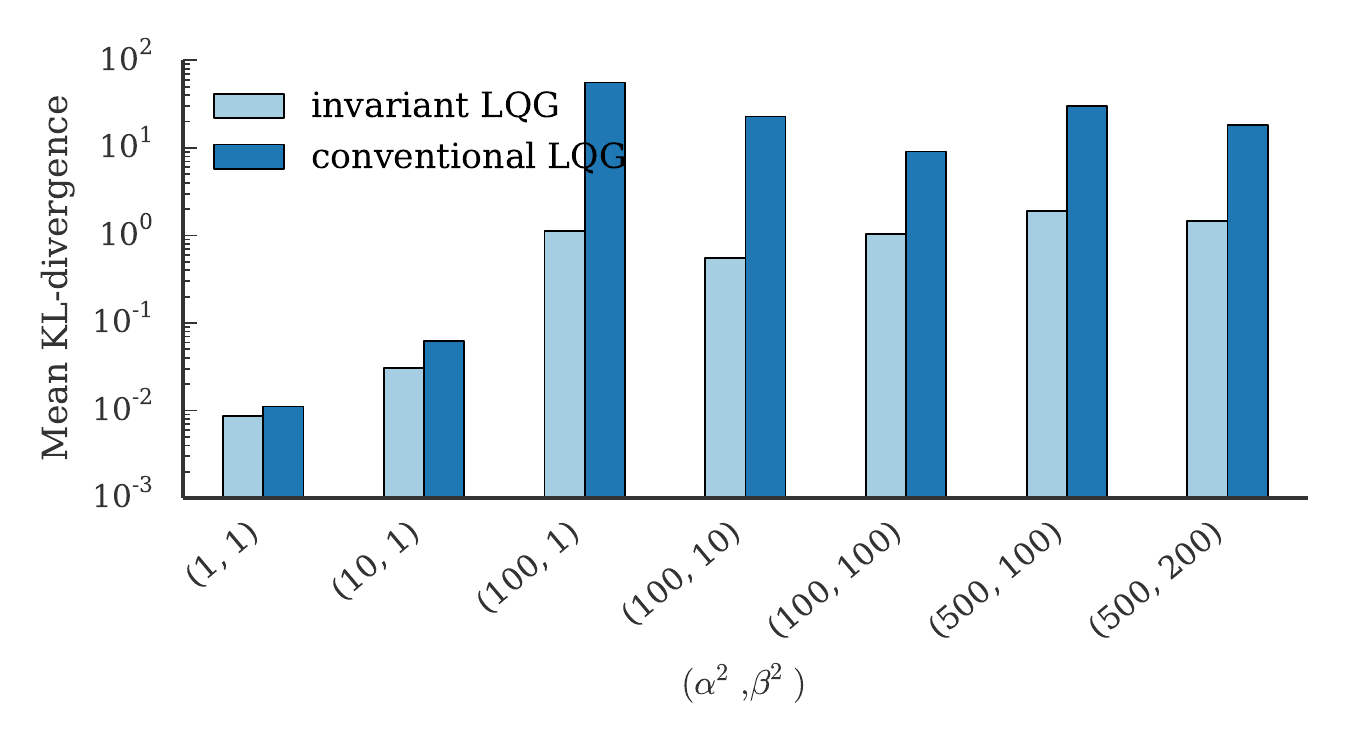}
    \caption{When using the invariant LQG, the predictions of the state distribution better match the simulations (lower KL-divergence). This is increasingly significant as the noises $(\alpha, \beta)$ grow. The KL-divergence is shown using a logarithmic scale.}
    \label{fig:KL}
\end{figure}
In order to compare the prediction performances of the invariant approach (subsection \ref{subsec:apriori_inv}) and the conventional approach (subsection \ref{subsec:apriori_clas}), we reuse the simulations of  Section \ref{sec:invariant}.
For each initial covariance level $\alpha^2$ and noises level $\beta^2$ we can compute, in advance, the predicted covariance matrices $\Sigma_t$ and $\Sigma'_t$  thanks to \eqref{eq:inv_distribution} and \eqref{eq:clas_distribution} respectively.
To measure the \enquote{distance} between the predicted distributions and the actual distributions obtained by simulation, we use the symmetric Kullback-Leibler divergence (KL-divergence) that is a natural way to measure a discrepancy between probability distributions.
For two Gaussians $\normal(\mathbf{m_0}, \Sigma_0)$ and $\normal(\mathbf{m_1}, \Sigma_1)$ of dimension $n$, it is given by:
\begin{align}
    KL = \frac{1}{4} \biggr(&\tr (\Sigma^{-1}_1\Sigma_0) + (\mathbf{m}_1 - \mathbf{m}_0)^T\Sigma^{-1}_1(\mathbf{m}_1 - \mathbf{m}_0) \nonumber \\
        &- \log \frac{\det \Sigma_0}{\det \Sigma_1} - k \biggr) + \nonumber\\
        \frac{1}{4} \biggr(&\tr (\Sigma^{-1}_0\Sigma_1) + (\mathbf{m}_0 - \mathbf{m}_1)^T\Sigma^{-0}_0(\mathbf{m}_0 - \mathbf{m}_1) \nonumber \\
        &- \log \frac{\det \Sigma_1}{\det \Sigma_0} - k\biggr)
\end{align}

The results are displayed on Figure \ref{fig:KL}.
The prediction performance is equivalent for low $\alpha^2$ and $\beta^2$ but the invariant prediction is by far more accurate when these noise factors increase (more than ten times).
In fact, we can see on Figure \ref{fig:prediction} that the predicted covariance matrices $\Upsilon_{\reference \theta_t} {\Sigma_{[1:3,1:3]}}_t \Upsilon_{-\reference \theta_t}$ and ${\Sigma'_{[1:3,1:3]}}_t$ are very close even for large noises when compared in the same frame. This is no surprise: about the reference trajectory as long as the linear approximation is valid the frame in which the equations are derived should not matter that much.  However, when moving away from the reference trajectory, as in actual experiments, the non-linearities may play an important role, and the nice non-linear structure of the invariant LQG saves the day:  as shown in Section \ref{sec:invariant}, the invariant LQG is much more robust to high noise factors, while a non negligible number of conventional LQG trajectories get lost and their behavior becomes random. This results in very high divergences for the conventional prediction while the invariant prediction is still accurate.

\begin{figure}[!thb]
    \centering
    \includegraphics[width=\linewidth]{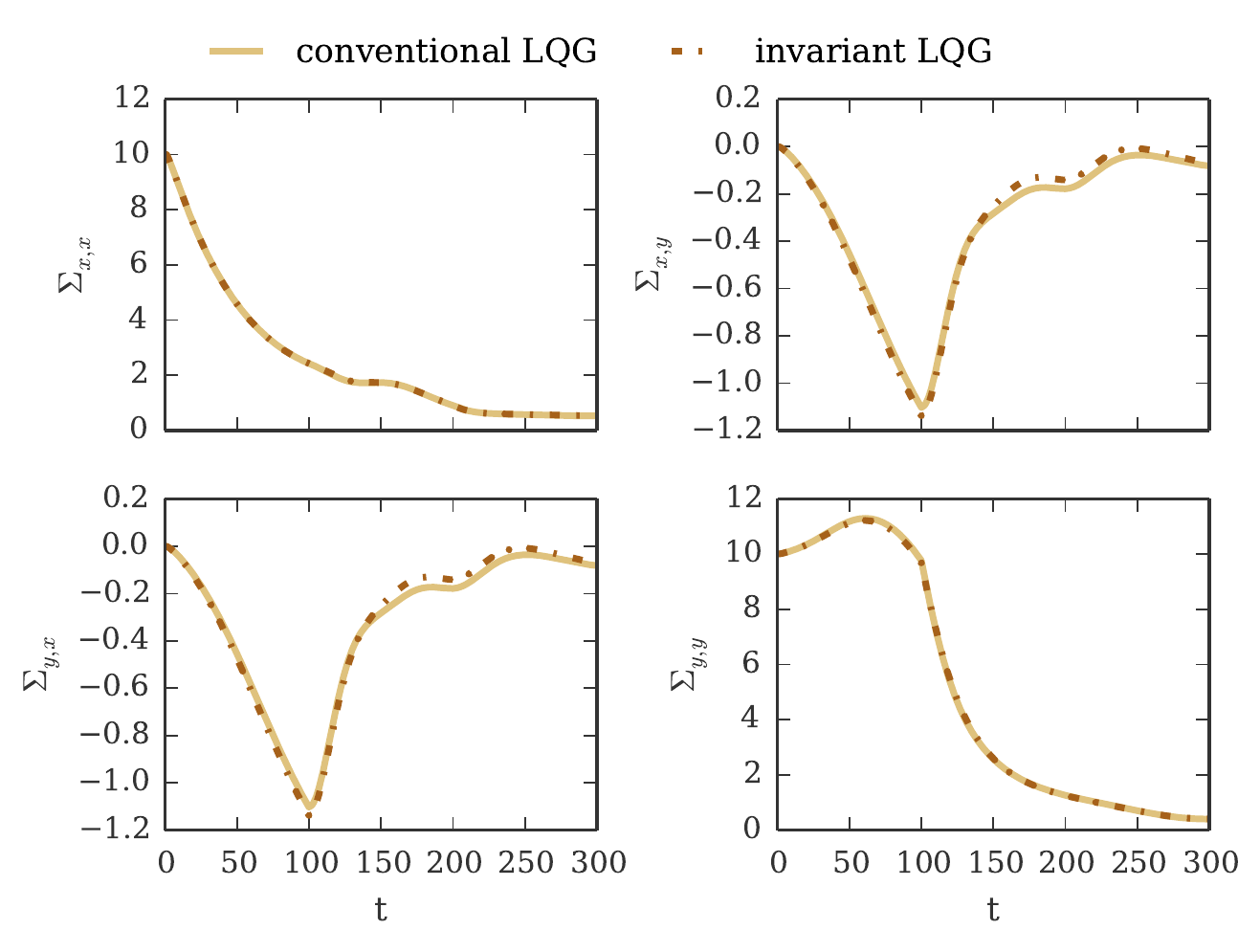}
    \caption{The evolutions over time of the invariant and conventional entries of the predicted covariance matrices are very close when rotated and compared in a common frame. The plot represents the matrices entries in the fixed frame for $(\alpha^2, \beta^2) = (100, 100)$}
    \label{fig:prediction}
\end{figure}
\section{Conclusion}
We introduced a new invariant Linear Quadratic Gaussian controller for the control of a unicycle robot along a reference trajectory.
We showed through extensive simulations that, when noises are strong, the achieved  cost reflecting the magnitude of the tracking error is greatly reduced in comparison to the one obtained when using a conventional LQG.
In practice, the invariant LQG showed increased robustness to high noises, suggesting that the linearized equation of both the observer and the controller have a much higher \enquote{validity zone} than in the conventional LQG case.
The trajectory cost for small noises is comparable, yet slightly better, than the one obtained using a conventional LQG.
Consequently, we recommend the use of invariant LQG over conventional LQG in any application where the initial uncertainty and the model and measurement noises might be high and where symmetries can be exploited.
In the future we would like to illustrate the results through real experimentations, and would also like to explore the superiority of the invariant approach from a theoretical viewpoint.

\printbibliography
\end{document}